\documentclass{article} 
\usepackage{times}
\usepackage{mathptmx}

\usepackage{amsmath,amsfonts,bm}

\def\eqref#1{equation~\ref{#1}}

\def\1{\bm{1}}

\DeclareMathAlphabet{\mathsfit}{\encodingdefault}{\sfdefault}{m}{sl}
\SetMathAlphabet{\mathsfit}{bold}{\encodingdefault}{\sfdefault}{bx}{n}

\usepackage{hyperref}
\usepackage{url}
\usepackage{graphicx}
\usepackage{booktabs}
\usepackage{multirow}
\usepackage[table]{xcolor}
\usepackage{amsthm}
\usepackage{natbib}
\newtheorem{proposition}{Proposition}

\title{Dual-Axis Policy Optimization for LLM Agents: Bayesian Feedback Attribution and Trajectory Mass Normalization}

\author{
Yingxuan Zhuang$^{1,*}$ \quad
Binhe Yu$^{1,*}$ \quad
Jingxiao Yang$^{1,*}$ \quad
Ruopei Sun$^{2,*}$ \\
Ziting li$^{3}$\quad
Cheng Tan $^{1}$\quad
Xuhong Zhang $^{1}$\quad
Jianwei Yin $^{1}$\\
Jintao Chen$^{1}$\\
$^{1}$Zhejiang University \\
$^{2}$University of Science and Technology of China\\
$^{3}$University of New South Wales\\
$^{4}$Shanghai Artificial Intelligence Laboratory \\
$^{*}$Equal contribution
}

\makeatletter
\@ifundefined{c@tmntheorem}{%
}{}
\makeatletter
\@ifundefined{c@bayestheorem}{}{}
\makeatother

\makeatother

\begin{document}

\maketitle
\begin{abstract}
Reinforcement learning for LLM agents involves two distinct optimization
dimensions: how environment feedback is exploited within a trajectory, and
how complete trajectories are aggregated across a batch.
We formulate these dimensions as \textbf{Intra-Trajectory Feedback Attribution}
and \textbf{Inter-Trajectory Objective Aggregation}, and introduce
\textbf{BATON} (\textbf{B}ayesian \textbf{A}ttribution and
\textbf{T}rajectory \textbf{O}bjective \textbf{N}ormalization), a dual-axis
policy optimization framework.
BATON instantiates the first axis with Bayesian Feedback Attribution, which
constructs a feedback-conditioned posterior over sampled actions, and the
second with Trajectory Mass Normalization (TMN), which assigns equal
optimization mass to complete trajectories.
Experiments with GRPO and GiGPO on ALFWorld, WebShop, and SearchQA show that both axes provide independent gains and that
their combination consistently achieves the strongest overall performance
across model scales.
\end{abstract}

\section{Introduction}
\label{sec:introduction}

Reinforcement learning is increasingly used to train LLM agents that reason,
call tools, and act through multi-turn interaction rather than produce a single
response. Prompted and tool-augmented agents already operate in embodied, web,
software, and conversational environments
\citep{yao2023react,schick2023toolformer,shridhar2021alfworld,
yao2022webshop,wang2022scienceworld,yang2024sweagent}, while recent benchmarks
expose increasingly long and heterogeneous interaction trajectories
\citep{liu2023agentbench,ma2024agentboard,zhou2023webarena,
deng2023mind2web,drouin2024workarena,lu2024weblinx,yao2024taubench}.
Beyond prompting and supervised trajectory tuning
\citep{zeng2023agenttuning}, recent work increasingly optimizes agent policies
directly with reinforcement learning
\citep{qi2024webrl,wang2025ragen,gigpo,zeng2025turncredit,
luo2025agentlightning}.

Policy optimization for LLMs has rapidly evolved from PPO-based RLHF
\citep{schulman2017ppo,ouyang2022training} toward critic-free and group-based
objectives such as RLOO and GRPO \citep{ahmadian2024back,grpo}, together with
refinements to clipping, reduction, and sequence-level optimization
\citep{yu2025dapo,liu2025drgrpo,hu2025reinforcepp,zheng2025gspo}.
In multi-turn agents, however, optimization acts on structured trajectories in
which each action is followed by an environment observation. This exposes two
conceptually distinct questions: how observed feedback should affect learning
signals \emph{within} a trajectory, and how complete trajectories should
contribute to the objective \emph{across} a batch. We call these dimensions
\textbf{Intra-Trajectory Feedback Attribution} and
\textbf{Inter-Trajectory Objective Aggregation}.

This distinction becomes especially visible in variable-length interaction.
As shown in Figure~\ref{fig:motivation}, unsuccessful trajectories are often substantially longer
than successful ones. 
Inspection of prolonged failures reveals repeated
ineffective actions, misread state changes, and actions taken before required
constraints are satisfied---behaviors consistent with insufficient exploitation
of environment feedback. 
Meanwhile, global valid-token averaging creates a
separate optimization effect: trajectories containing more valid policy tokens
receive proportionally larger scalar mass. In our training data, the longest
25\% of trajectories account for 60.3\% of the aggregation mass in ALFWorld
and 42.2\% in WebShop. Thus, prolonged failures expose two distinct learning
challenges: feedback may be insufficiently exploited locally, while long
trajectories may simultaneously dominate aggregation globally. The measured
aggregation imbalance does not by itself establish the behavioral cause of
failure; it motivates treating the two dimensions separately.

\begin{figure}[!t]
\centering
\includegraphics[
    width=\linewidth,
    trim=220 100 160 135,
    clip
]{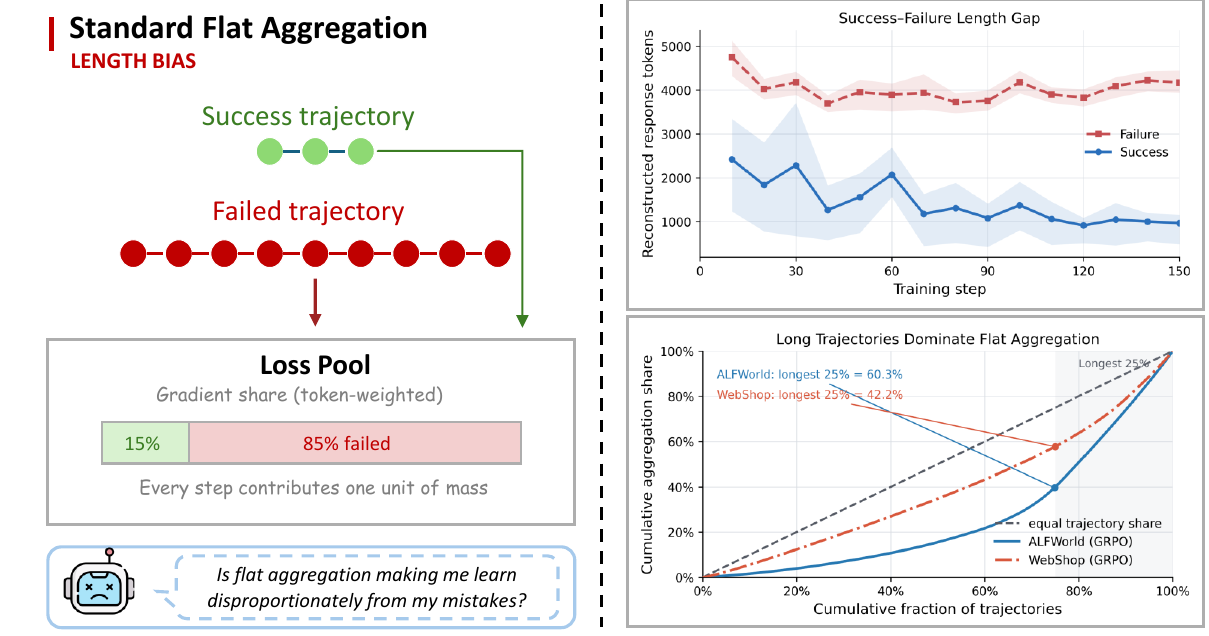}
\caption{
Motivation of BATON.
Flat token-level aggregation gives longer trajectories disproportionate
optimization mass, motivating separate treatment of feedback attribution and
trajectory aggregation.
}
\label{fig:motivation}
\end{figure}

We introduce \textbf{BATON} (\textbf{B}ayesian \textbf{A}ttribution and
\textbf{T}rajectory \textbf{O}bjective \textbf{N}ormalization), a dual-axis
policy optimization framework. For Intra-Trajectory Feedback Attribution,
BATON uses Bayesian Feedback Attribution. Unlike verbal reflection or generic
self-correction
\citep{shinn2023reflexion,madaan2023selfrefine,gou2024critic,
zhou2023lats,wang2023voyager}, it compares actions sampled under the same
pre-action context by combining their empirical frequencies with the likelihood
of the observed next environment response. The resulting feedback-conditioned posterior determines a normalized
attribution weight that reallocates the host learner's policy signal
across decisions within each trajectory; we further show that
its expected attribution gain is governed by the distinguishability of
action-conditioned feedback distributions. For Inter-Trajectory Objective
Aggregation, Trajectory Mass Normalization (TMN) averages valid learning terms
within each complete trajectory before averaging trajectories, replacing
token-proportional outer mass with a uniform $1/B$ share. Both components
leave the host learner's reward, advantage estimator, clipping rule, and rollout
procedure unchanged.

We evaluate BATON with GRPO and GiGPO on ALFWorld, WebShop, and
search-augmented question answering, where multi-turn search policies have
recently been optimized with outcome- and step-level reinforcement learning
\citep{jin2025searchr1,sun2025zerosearch,zheng2025stepsearch}.
Across model scales, both axes yield independent gains and their combination is
consistently strongest. Improvements also persist with GiGPO, which already
provides finer-grained within-trajectory credit assignment, and are accompanied
by shorter evaluation trajectories and fewer generated policy tokens.

\section{Preliminaries}
\label{sec:preliminaries}

\paragraph{Problem setup.}
We consider a language-model agent interacting with an environment to complete multi-step tasks.
At decision step $t$, the agent conditions on the interaction history
$h_t=(o_0,a_0,\ldots,o_t)$ and generates an action
$a_t\sim\pi_\theta(\cdot\mid h_t)$, after which the environment returns the next observation $o_{t+1}$.
A complete trajectory $\tau_i$ contains $T_i$ decision steps and receives a task return $R_i$.
Because each textual action may contain multiple policy tokens, we let $n_{i,t}$ denote the number of valid policy tokens at step $t$ and
$N_i=\sum_{t=1}^{T_i}n_{i,t}$ the total number of valid policy tokens in trajectory $\tau_i$.
We optimize over a batch of $B$ complete trajectories.

\paragraph{Group-based policy optimization.}
Recent agent RL methods such as GRPO and GiGPO optimize groups of sampled trajectories without requiring a separate value model.
For GRPO, each trajectory $\tau_i$ is assigned a group-relative advantage $\hat A_i$ based on the returns of sibling rollouts.
Let
$r_{i,t,k}(\theta)
=
\pi_\theta(y_{i,t,k}\mid h_{i,t,k})/
\pi_{\theta_{\rm old}}(y_{i,t,k}\mid h_{i,t,k})$
denote the likelihood ratio of the $k$-th valid policy token.
Under global valid-policy-token averaging, the policy objective is
\begin{equation}
\mathcal L_{\rm GRPO}(\theta)
=
-\frac{1}{S}
\sum_{i=1}^{B}
\sum_{t=1}^{T_i}
\sum_{k=1}^{n_{i,t}}
\min\!\left[
r_{i,t,k}(\theta)\hat A_i,\,
\operatorname{clip}\!\left(
r_{i,t,k}(\theta),1-\epsilon,1+\epsilon
\right)\hat A_i
\right],
\quad
\label{eq:grpo}
\end{equation}
where $S=\sum_{i=1}^{B}N_i$. We omit separately reduced regularization terms for clarity.

\paragraph{Flat trajectory aggregation.}
More generally, let $\ell^{\mathcal A}_{i,t,k}$ denote the local
policy-surrogate loss of a host learner $\mathcal A$, and let
$\bar{\ell}^{\mathcal A}_i
=
N_i^{-1}\sum_{t,k}\ell^{\mathcal A}_{i,t,k}$
be its trajectory-level mean.
A generic trajectory-weighted objective can be written as
\begin{equation}
\mathcal J_{\mathcal A}(\theta;\mathbf q)
=
\sum_{i=1}^{B}q_i\bar{\ell}^{\mathcal A}_i(\theta),
\qquad
\sum_{i=1}^{B}q_i=1,
\qquad
q_i^{\rm flat}
=
\frac{N_i}{\sum_jN_j}.
\label{eq:trajectory_aggregation}
\end{equation}
Thus, global valid-token averaging assigns trajectory $\tau_i$ a scalar
contribution proportional to its number of valid policy tokens.

\section{Method}
\label{sec:method}

We propose \textbf{BATON}, a dual-axis policy optimization framework for
multi-turn LLM agents.
BATON separates policy optimization into two complementary dimensions:
\textbf{Intra-Trajectory Feedback Attribution}, which allocates learning mass
among decisions within a trajectory, and
\textbf{Inter-Trajectory Objective Aggregation}, which determines the scalar
mass assigned to complete trajectories.
We instantiate the first axis with Bayesian Feedback Attribution (BFA) and the
second with Trajectory Mass Normalization (TMN).
Both components modify different levels of the objective and can be combined
with host learners such as GRPO and GiGPO.

\begin{figure*}[t]
\centering
\includegraphics[width=\linewidth]{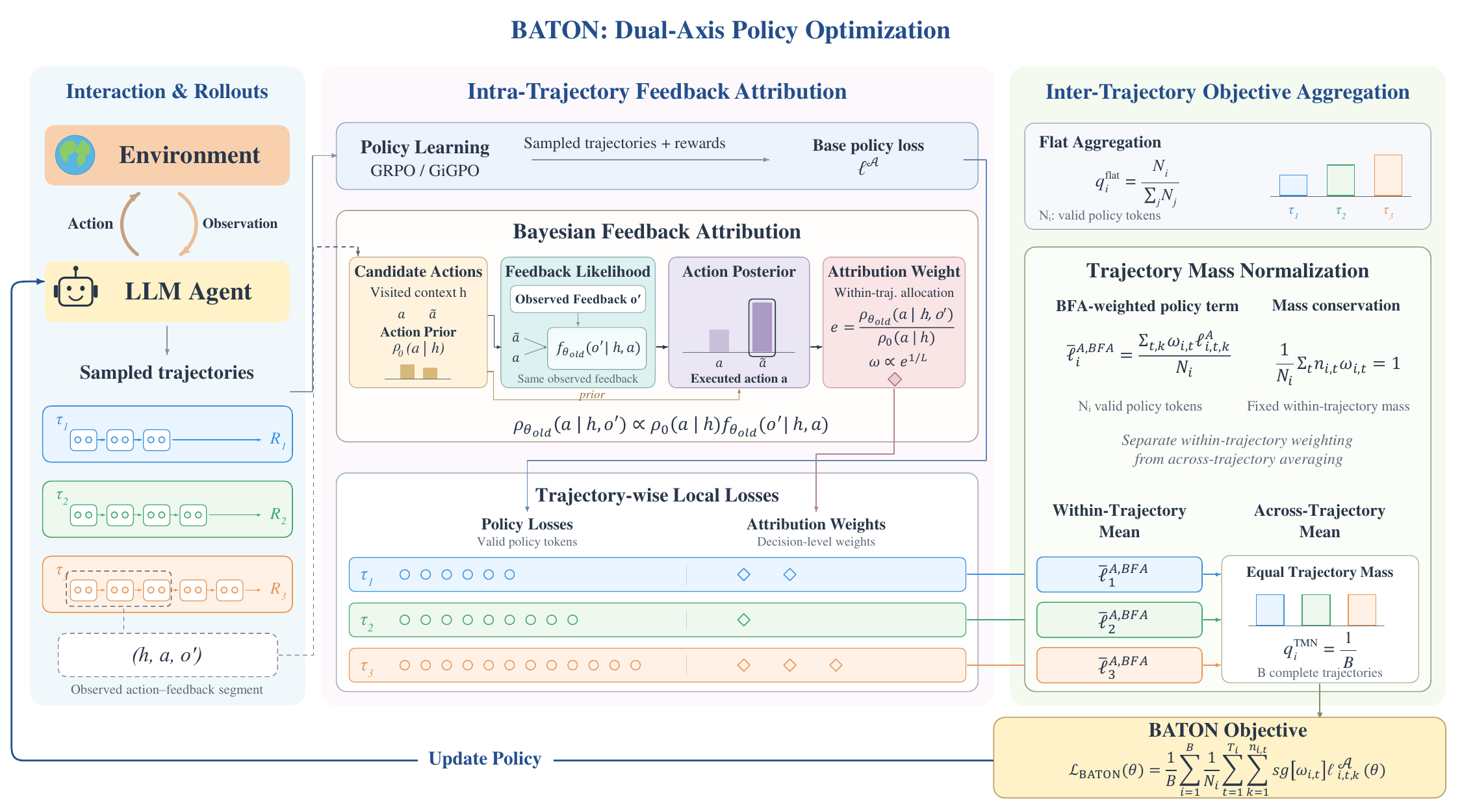}
\caption{
Overview of BATON.
Bayesian Feedback Attribution evaluates feedback compatibility between visited
decisions and observed environment responses, producing normalized
stop-gradient weights within trajectories.
Trajectory Mass Normalization controls the outer aggregation across complete
trajectories.
}
\label{fig:method}
\end{figure*}

\subsection{Dual-Axis Policy Optimization}

For a batch of trajectories
$\mathcal B=\{\tau_i\}_{i=1}^{B}$,
we decompose policy optimization into within-trajectory allocation and
across-trajectory aggregation:

\begin{equation}
\mathcal L(\theta)
=
\sum_{i=1}^{B}
q_i
\frac{1}{N_i}
\sum_{t=1}^{T_i}
\sum_{k=1}^{n_{i,t}}
\omega_{i,t}
\ell^{\mathcal A}_{i,t,k}(\theta),
\label{eq:dual_axis_objective}
\end{equation}

where $q_i$ controls trajectory-level mass and $\omega_{i,t}$ controls the
allocation among decisions.
We require

\begin{equation}
\frac{1}{N_i}
\sum_t n_{i,t}\omega_{i,t}=1 ,
\label{eq:within_mass_constraint}
\end{equation}

so that attribution redistributes mass inside trajectories without changing
their total contribution.

\subsection{Bayesian Feedback Attribution}
\label{sec:method_bayesian_feedback}

BFA estimates how strongly a visited decision is associated with its observed
feedback.
The host learner remains responsible for the signed optimization direction;
BFA only adjusts the relative contribution of different decisions.

For a visited decision $(h,a,o')$, where $h$ is the pre-action context,
$a$ is the executed action, and $o'$ is the observed environment feedback,
BFA samples a counterfactual action

\begin{equation}
\tilde a\sim\pi_{\theta_{\rm old}}(\cdot|h).
\label{eq:counterfactual_sampling}
\end{equation}

The executed and counterfactual actions form a pairwise candidate set.
For each candidate action, BFA evaluates the compatibility with the observed
feedback:

\begin{equation}
\log f_{\theta_{\rm old}}(o'|h,a)
=
\sum_{m\in\mathcal V(o')}
\log
\pi_{\theta_{\rm old}}
(o'_m|h,a,o'_{<m}),
\label{eq:feedback_likelihood}
\end{equation}

where $\mathcal V(o')$ denotes valid feedback-target token positions.
The likelihood measures model-based feedback compatibility rather than causal
effects of counterfactual execution.

\paragraph{Bayesian evidence.}

Under the pairwise construction,
$\rho_0(a|h)=\rho_0(\tilde a|h)=1/2$.
The feedback-conditioned posterior of the executed action is

\begin{equation}
\rho_{\theta_{\rm old}}(a|h,o')
=
\frac{
f_{\theta_{\rm old}}(o'|h,a)
}{
f_{\theta_{\rm old}}(o'|h,a)
+
f_{\theta_{\rm old}}(o'|h,\tilde a)
}.
\label{eq:pairwise_posterior}
\end{equation}

We define the Bayesian evidence ratio

\begin{equation}
e(h,a,o')
=
\frac{
\rho_{\theta_{\rm old}}(a|h,o')
}{
\rho_0(a|h)
}
=
\frac{
2f_{\theta_{\rm old}}(o'|h,a)
}{
f_{\theta_{\rm old}}(o'|h,a)
+
f_{\theta_{\rm old}}(o'|h,\tilde a)
}.
\label{eq:evidence_ratio}
\end{equation}

\begin{proposition}[Bayesian Evidence Interpretation]
\label{prop:evidence_interpretation}

Let

\begin{equation}
m(o'|h)
=
\frac12f_{\theta_{\rm old}}(o'|h,a)
+
\frac12f_{\theta_{\rm old}}(o'|h,\tilde a)
\end{equation}

denote the pairwise feedback mixture.
The Bayesian evidence satisfies

\begin{equation}
e(h,a,o')
=
\frac{
f_{\theta_{\rm old}}(o'|h,a)
}{
m(o'|h)
}.
\end{equation}

Moreover,

\begin{equation}
\mathbb E_{o'\sim f_{\theta_{\rm old}}(\cdot|h,a)}
[
\log e(h,a,o')
]
=
D_{\mathrm{KL}}
(
f_{\theta_{\rm old}}(\cdot|h,a)
\|
m(\cdot|h)
)
\geq 0 .
\end{equation}

\end{proposition}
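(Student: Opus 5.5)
The plan is a direct algebraic rewriting followed by recognizing a KL divergence. Throughout, $h$, $a$ and the sampled counterfactual $\tilde a$ are held fixed; only $o'$ is treated as random.

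\textbf{First identity.} Start from the closed form of the evidence ratio in \eqref{eq:evidence_ratio}. Divide numerator and denominator by $2$:
\[
e(h,a,o')=\frac{f_{\theta_{\rm old}}(o'|h,a)}{\tfrac12 f_{\theta_{\rm old}}(o'|h,a)+\tfrac12 f_{\theta_{\rm old}}(o'|h,\tilde a)}.
\]
The denominator is exactly $m(o'|h)$. This is the claimed identity, with no further work needed.

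\textbf{Second identity.} Take logarithms, giving $\log e=\log f_{\theta_{\rm old}}(o'|h,a)-\log m(o'|h)$. Then integrate (or sum) against $f_{\theta_{\rm old}}(\cdot|h,a)$. By the definition of Kullback--Leibler divergence, the result is $D_{\mathrm{KL}}(f_{\theta_{\rm old}}(\cdot|h,a)\,\|\,m(\cdot|h))$.

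Nonnegativity then follows from Gibbs' inequality. Concretely, apply Jensen's inequality to the concave $\log$:
\[
-\mathbb E_f[\log(m/f)]\ge -\log \mathbb E_f[m/f]=-\log\!\sum_{o'\in\operatorname{supp}f} m(o'|h)\ge -\log 1=0.
\]

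\textbf{Main obstacle: well-posedness.} The only delicate point is that the objects involved are genuine probability distributions over the same feedback space. Here $f_{\theta_{\rm old}}(\cdot|h,a)$ is a product of autoregressive conditionals over token sequences, so it is a (sub)probability distribution on feedback strings. I will assume it is normalized over complete feedback sequences; the restriction to valid positions $\mathcal V(o')$ is taken as defining the event space. Under that assumption $m$ is a convex combination of two distributions, hence also a distribution.

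Absolute continuity is automatic. Since $m\ge \tfrac12 f_{\theta_{\rm old}}(\cdot|h,a)$, we have $f\ll m$, so $\log e$ is finite $f$-almost surely. It is also bounded above by $\log 2$, so the expectation is well defined, possibly $+\infty$ only in the degenerate sense excluded by that bound.

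If $f$ were only a subprobability, the Jensen step would still give $\sum m\le 1$. However, the equality with KL would need to be read with unnormalized measures. I would therefore state the normalization assumption explicitly.
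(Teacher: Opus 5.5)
Your proposal is correct and follows the paper's proof essentially step for step: rewrite $2f_{\theta_{\rm old}}(o'|h,a)/(f_{\theta_{\rm old}}(o'|h,a)+f_{\theta_{\rm old}}(o'|h,\tilde a))$ as $f_{\theta_{\rm old}}(o'|h,a)/m(o'|h)$, take logarithms, and recognize the expectation under $f_{\theta_{\rm old}}(\cdot|h,a)$ as $D_{\mathrm{KL}}(f_{\theta_{\rm old}}(\cdot|h,a)\,\|\,m(\cdot|h))$. Your additions are points the paper takes for granted, not a different route: the Jensen derivation of nonnegativity, absolute continuity from $m\ge\tfrac12 f_{\theta_{\rm old}}(\cdot|h,a)$, the bound $\log e\le\log 2$ (which with Gibbs places the expectation in $[0,\log 2]$), and the explicit assumption that $f_{\theta_{\rm old}}$ is normalized over feedback strings.
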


Proposition~\ref{prop:evidence_interpretation} characterizes the statistical
quantity underlying BFA evidence: larger evidence corresponds to greater
distinguishability between the executed action's feedback distribution and its
policy-sampled alternative.
The proof is provided in Appendix~\ref{app:bfa_theory}.

\paragraph{Length-calibrated attribution.}

For decision $(i,t)$, let

\[
e_{i,t}=e(h_{i,t},a_{i,t},o_{i,t+1}).
\]

Since the feedback likelihood is accumulated over tokens, we use geometric
evidence

\begin{equation}
r_{i,t}=e_{i,t}^{1/L_{i,t}},
\end{equation}

where $L_{i,t}=|\mathcal V(o_{i,t+1})|$.
The normalized attribution weight is

\begin{equation}
\omega_{i,t}
=
\frac{
r_{i,t}
}{
\frac1{N_i}
\sum_{s=1}^{T_i}
n_{i,s}r_{i,s}
}.
\label{eq:bfa_weight}
\end{equation}

This normalization gives

\begin{equation}
\frac1{N_i}
\sum_t n_{i,t}\omega_{i,t}=1,
\label{eq:bfa_mass_conservation}
\end{equation}

preserving each trajectory's total optimization mass.

BFA applies detached weights to the host objective:

\begin{equation}
\bar{\ell}^{\mathcal A,\mathrm{BFA}}_i(\theta)
=
\frac1{N_i}
\sum_{t,k}
\operatorname{sg}[\omega_{i,t}]
\ell^{\mathcal A}_{i,t,k}(\theta).
\label{eq:bfa_policy_loss}
\end{equation}

Therefore, BFA only rescales host learner updates and does not introduce an
independent optimization direction.

\subsection{Trajectory Mass Normalization}
\label{sec:method_tmn}

BFA controls allocation within trajectories, while TMN controls aggregation
across trajectories.

Under global valid-token averaging, the trajectory coefficient is

\begin{equation}
q_i^{\rm flat}
=
\frac{N_i}{\sum_jN_j}.
\label{eq:flat_mass}
\end{equation}

Thus, longer trajectories receive larger objective mass.
TMN instead assigns equal mass to complete trajectories:

\begin{equation}
q_i^{\rm TMN}
=
\frac1B .
\label{eq:tmn_mass}
\end{equation}

Because BFA preserves the within-trajectory mass constraint,
the two components are independent:
BFA reallocates mass inside trajectories, whereas TMN changes only the
outer trajectory aggregation.

\subsection{Unified BATON Objective}
\label{sec:method_unified}

Combining BFA and TMN gives

\begin{equation}
\mathcal L_{\rm BATON}(\theta)
=
\frac1B
\sum_{i=1}^{B}
\frac1{N_i}
\sum_{t=1}^{T_i}
\sum_{k=1}^{n_{i,t}}
\operatorname{sg}[\omega_{i,t}]
\ell^{\mathcal A}_{i,t,k}(\theta).
\label{eq:baton_objective}
\end{equation}

BFA determines the normalized within-trajectory allocation
$\omega_{i,t}$, while TMN determines the uniform outer trajectory mass
$1/B$.
This formulation yields the controlled $2\times2$ decomposition in our
ablation study:
the host learner uses uniform within-trajectory weights and token-proportional
aggregation; BFA modifies only within-trajectory allocation; TMN modifies only
trajectory-level aggregation; BATON combines both.

\begin{table*}[t]
\centering
\caption{
Main results on ALFWorld and WebShop.
We report success rate (\%) for each ALFWorld task domain and the overall
result, and task score and success rate (\%) on WebShop. Within each backbone, bold marks the highest mean in each metric column
separately within the GRPO-based and GiGPO-based method groups.
}
\label{tab:main_results_agent}
\footnotesize
\setlength{\tabcolsep}{1.45mm}
\renewcommand{\arraystretch}{1.08}
\resizebox{\linewidth}{!}{%
\begin{tabular}{lccccccccc}
\toprule
\multirow{2}{*}{Method}
& \multicolumn{7}{c}{ALFWorld} & \multicolumn{2}{c}{WebShop} \\
\cmidrule(lr){2-8}\cmidrule(lr){9-10}
& Pick & Look & Clean & Heat & Cool & Pick2 & All & Score & Succ. \\
\midrule
\multicolumn{10}{l}{\textit{Closed-Source Models}} \\
GPT-4o & 75.5 & 60.5 & 31.6 & 56.5 & 21.9 & 49.4 & 48.1 & 32.1 & 23.5 \\
Gemini-2.5-Pro & 92.5 & 63.7 & 61.9 & 69.3 & 26.2 & 58.9 & 60.4 & 42.2 & 36.1 \\
\midrule
\multicolumn{10}{l}{\textit{Qwen2.5-1.5B-Instruct}} \\
Qwen2.5 & 6.1 & 5.2 & 3.4 & 9.5 & 4.5 & 0.0 & 4.2 & 22.8 & 5.4 \\
ReAct & 17.1 & 20.9 & 15.5 & 6.5 & 7.6 & 2.2 & 12.9 & 40.4 & 11.1 \\
Reflexion & 35.7 & 22.0 & 22.0 & 13.3 & 19.6 & 3.6 & 22.0 & 55.5 & 22.1 \\
PPO (with critic) & $65.1_{\pm 3.3}$ & $40.1_{\pm 7.1}$ & $57.3_{\pm 5.0}$ & $60.3_{\pm 6.4}$ & $46.8_{\pm 4.1}$ & $47.2_{\pm 2.1}$ & $54.5_{\pm 3.0}$ & $73.5_{\pm 3.2}$ & $51.7_{\pm 2.8}$ \\
RLOO & $88.0_{\pm 3.2}$ & $53.2_{\pm 8.4}$ & $70.8_{\pm 6.1}$ & $63.1_{\pm 8.6}$ & $66.0_{\pm 5.7}$ & $57.1_{\pm 4.5}$ & $69.6_{\pm 2.6}$ & $74.2_{\pm 5.4}$ & $51.9_{\pm 6.8}$ \\
GRPO & $85.7_{\pm 1.6}$ & $53.4_{\pm 8.2}$ & $84.7_{\pm 6.6}$ & $77.8_{\pm 8.1}$ & $60.0_{\pm 4.8}$ & $53.3_{\pm 5.7}$ & $72.9_{\pm 3.4}$ & $75.6_{\pm 3.6}$ & $57.1_{\pm 3.6}$ \\
GRPO + EnvRL & $85.9_{\pm 2.3}$ & $53.9_{\pm 6.9}$ & $85.0_{\pm 4.5}$ & $82.0_{\pm 4.7}$ & $76.8_{\pm 3.9}$ & $71.9_{\pm 4.4}$ & $77.4_{\pm 3.8}$ & $83.0_{\pm 2.1}$ & $67.0_{\pm 2.6}$ \\
\rowcolor{blue!8}
GRPO + BATON (Ours) & $\mathbf{91.0}_{\pm 3.6}$ & $\mathbf{80.1}_{\pm 4.9}$ & $\mathbf{91.9}_{\pm 3.4}$ & $\mathbf{83.6}_{\pm 3.8}$ & $\mathbf{83.9}_{\pm 3.5}$ & $\mathbf{78.5}_{\pm 4.7}$ & $\mathbf{85.6}_{\pm 2.8}$  & $\mathbf{85.2}_{\pm 2.0}$ & $\mathbf{68.2}_{\pm 2.8}$ \\
GiGPO & $94.6_{\pm 5.7}$ & $67.2_{\pm 4.8}$ & $94.5_{\pm 3.9}$ & $94.7_{\pm 7.6}$ & $79.4_{\pm 4.9}$ & $76.6_{\pm 5.3}$ & $86.8_{\pm 1.8}$ & $83.7_{\pm 1.9}$ & $67.1_{\pm 4.7}$ \\
GiGPO + EnvRL & $95.0_{\pm 3.3}$ & $70.8_{\pm 6.4}$ & $94.9_{\pm 4.6}$ & $97.7_{\pm 3.5}$ & $\mathbf{94.1}_{\pm 4.8}$ & $\mathbf{94.7}_{\pm 4.7}$ & $91.8_{\pm 3.5}$ & $88.2_{\pm 2.5}$ & $74.2_{\pm 3.7}$ \\
\rowcolor{blue!8}
GiGPO + BATON (Ours) & $\mathbf{97.2}_{\pm 1.8}$ & $\mathbf{83.5}_{\pm 4.6}$ & $\mathbf{98.0}_{\pm 1.5}$ & $\mathbf{99.8}_{\pm 0.4}$ & $94.0_{\pm 3.1}$ & $85.6_{\pm 4.2}$ & $\mathbf{94.1}_{\pm 1.7}$ & $\mathbf{90.3}_{\pm 1.6}$ & $\mathbf{76.4}_{\pm 2.7}$ \\
\midrule
\multicolumn{10}{l}{\textit{Qwen2.5-7B-Instruct}} \\
Qwen2.5 & 33.7 & 21.4 & 19.7 & 6.6 & 3.0 & 3.1 & 14.9 & 26.1 & 8.0 \\
ReAct & 48.2 & 35.8 & 34.1 & 13.5 & 17.9 & 17.8 & 31.3 & 46.5 & 19.3 \\
Reflexion & 62.4 & 41.3 & 45.1 & 30.5 & 36.6 & 23.6 & 42.8 & 57.9 & 29.1 \\
PPO (with critic) & $92.5_{\pm 3.8}$ & $63.7_{\pm 8.6}$ & $92.8_{\pm 2.5}$ & $89.2_{\pm 6.8}$ & $80.7_{\pm 2.2}$ & $68.6_{\pm 8.1}$ & $80.5_{\pm 2.8}$ & $81.1_{\pm 3.3}$ & $68.9_{\pm 4.9}$ \\
RLOO & $87.3_{\pm 4.5}$ & $78.6_{\pm 8.1}$ & $87.1_{\pm 6.0}$ & $81.6_{\pm 7.4}$ & $71.5_{\pm 5.3}$ & $49.1_{\pm 8.2}$ & $75.4_{\pm 4.8}$ & $80.5_{\pm 3.0}$ & $65.4_{\pm 4.2}$ \\
GRPO & $91.1_{\pm 4.9}$ & $65.7_{\pm 6.9}$ & $89.5_{\pm 5.2}$ & $74.4_{\pm 7.1}$ & $72.9_{\pm 5.5}$ & $64.5_{\pm 7.1}$ & $77.7_{\pm 5.0}$ & $79.0_{\pm 2.9}$ & $66.3_{\pm 3.5}$ \\
GRPO + EnvRL & $92.6_{\pm 4.3}$ & $\mathbf{77.5}_{\pm 5.4}$ & $\mathbf{92.9}_{\pm 3.9}$ & $\mathbf{79.7}_{\pm 3.0}$ & $77.4_{\pm 3.7}$ & $71.1_{\pm 6.2}$ & $80.4_{\pm 3.9}$ & $81.8_{\pm 1.9}$ & $68.6_{\pm 2.3}$ \\
\rowcolor{blue!8}
GRPO + BATON (Ours) & $\mathbf{95.6}_{\pm 2.1}$ & $72.4_{\pm 5.3}$ & $90.3_{\pm 3.8}$ & $78.6_{\pm 4.9}$ & $\mathbf{94.1}_{\pm 3.2}$ & $\mathbf{75.3}_{\pm 4.6}$ & $\mathbf{86.9}_{\pm 2.7}$ & $\mathbf{83.4}_{\pm 1.9}$ & $\mathbf{70.3}_{\pm 2.5}$ \\
GiGPO & $97.5_{\pm 1.7}$ & $83.0_{\pm 7.7}$ & $98.6_{\pm 1.5}$ & $84.1_{\pm 7.4}$ & $89.0_{\pm 8.0}$ & $79.4_{\pm 6.8}$ & $90.9_{\pm 1.4}$ & $86.4_{\pm 2.4}$ & $74.9_{\pm 4.0}$ \\
GiGPO + EnvRL & $98.3_{\pm 1.5}$ & $92.2_{\pm 4.5}$ & $98.9_{\pm 0.8}$ & $93.8_{\pm 6.9}$ & $93.3_{\pm 3.5}$ & $\mathbf{94.6}_{\pm 3.0}$ & $94.5_{\pm 4.0}$ & $88.4_{\pm 4.9}$ & $76.3_{\pm 4.7}$ \\
\rowcolor{blue!8}
GiGPO + BATON (Ours) & $\mathbf{99.8}_{\pm 0.4}$ & $\mathbf{92.5}_{\pm 3.6}$ & $\mathbf{99.8}_{\pm 0.4}$ & $\mathbf{94.5}_{\pm 3.1}$ & $\mathbf{94.9}_{\pm 2.8}$ & $88.3_{\pm 3.9}$ & $\mathbf{94.9}_{\pm 1.5}$ & $\mathbf{89.8}_{\pm 1.8}$ & $\mathbf{79.5}_{\pm 2.6}$ \\
\bottomrule
\end{tabular}
}
\end{table*}

\section{Experiments}
\label{sec:experiments}

We evaluate the proposed dual-axis framework from three perspectives:
overall task performance, the independent and joint contributions of its two
optimization axes, and the execution behavior of the learned policies.

\subsection{Experimental Setup}
\label{sec:exp_setup}

We evaluate Qwen2.5-Instruct models at 1.5B and 7B scales on ALFWorld and
WebShop, and at 3B and 7B scales on search-augmented question answering.
We report success rate on ALFWorld, task score and success rate on WebShop,
and exact match on seven search-augmented QA datasets together with their
unweighted average.
We instantiate the framework with both GRPO and GiGPO as host learners and
compare against prompting and reinforcement-learning baselines, including
EnvRL. Matched variants share the same initial checkpoint, rollout budget,
reward and advantage construction, optimization settings, and evaluation
protocol, differing only in the component under study.



\subsection{Main Results}
\label{sec:exp_main}

Table~\ref{tab:main_results_agent} shows that the full dual-axis framework
consistently improves both GRPO and GiGPO across ALFWorld and WebShop and at
both model scales.
With Qwen2.5-1.5B-Instruct, the ALFWorld success rate increases from
72.9\% to 85.6\% for GRPO and from 86.8\% to 94.1\% for GiGPO.
On WebShop, the corresponding success rates improve from 57.1\% to 68.2\%
and from 67.1\% to 76.4\%, respectively.
The gains remain at the 7B scale: the framework improves ALFWorld from
77.7\% to 86.9\% with GRPO and from 90.9\% to 94.9\% with GiGPO, while
also improving their WebShop success rates from 66.3\% to 70.3\% and
from 74.9\% to 79.5\%.
These results indicate that jointly improving feedback attribution within
trajectories and objective aggregation across trajectories benefits host
learners with substantially different local credit constructions.
In particular, the improvements over GiGPO suggest that inter-trajectory
calibration and feedback-grounded supervision remain useful even when the
host learner already employs finer-grained within-trajectory credit signals.

\subsection{Search-Augmented Question Answering}
\label{sec:exp_searchqa}

We further evaluate BATON on search-augmented question answering, where
agents interact with external retrieval systems during multi-step reasoning.
Table~\ref{tab:searchqa_summary} reports the averaged exact match over seven
QA domains.
BATON improves both GRPO and GiGPO across model scales.
Compared with the corresponding host learners, BATON improves GRPO from
37.9\% to 41.1\% on Qwen2.5-3B and from 42.8\% to 45.9\% on Qwen2.5-7B.
Similarly, BATON improves GiGPO from 42.0\% to 43.2\% and from 47.1\% to
48.7\%, respectively.
The results indicate that the proposed dual-axis optimization remains
effective in retrieval-augmented agent settings.
Complete domain-level results are reported in Appendix~\ref{app:searchqa}.
\newcommand{\sqa}[2]{\ensuremath{\displaystyle #1_{\scriptstyle\pm #2}}}

\begin{table*}[t]
\centering
\caption{
Main results on search-augmented question answering.
We report exact match (\%) on seven evaluation domains and their unweighted
average.
NQ, TriviaQA, and PopQA are single-hop domains; HotpotQA,
2WikiMultihopQA, MuSiQue, and Bamboogle are multi-hop domains.
Within each backbone, bold marks the highest mean in each metric column
separately within the GRPO-based and GiGPO-based method groups.
}
\label{tab:searchqa_summary}
\footnotesize
\setlength{\tabcolsep}{1.35mm}
\renewcommand{\arraystretch}{1.12}
\resizebox{\linewidth}{!}{%
\begin{tabular}{lcccccccc}
\toprule
\multirow{2}{*}{Method}
& \multicolumn{3}{c}{Single-Hop QA}
& \multicolumn{4}{c}{Multi-Hop QA}
& \multicolumn{1}{c}{Overall} \\
\cmidrule(lr){2-4}\cmidrule(lr){5-8}\cmidrule(lr){9-9}
& NQ & TriviaQA & PopQA & HotpotQA & 2Wiki & MuSiQue & Bamboogle & Avg. \\
\midrule
\multicolumn{9}{l}{\textit{Qwen2.5-3B-Instruct}} \\

\addlinespace[2pt]
GRPO & \sqa{43.4}{2.8} & \sqa{62.1}{6.4} & \sqa{41.4}{5.1} & \sqa{37.8}{7.2} & \sqa{30.7}{4.6} & \sqa{14.8}{5.8} & \sqa{35.4}{3.7} & \sqa{37.9}{3.1} \\

GRPO + EnvRL & \sqa{44.6}{2.4} & \sqa{61.6}{6.9} & \sqa{42.3}{5.8} & \sqa{36.7}{5.3} & \sqa{31.1}{4.1} & \sqa{14.5}{4.9} & \sqa{39.0}{4.5} & \sqa{38.3}{3.6} \\

\rowcolor{blue!8}
GRPO + BATON (Ours) & \sqa{\mathbf{46.2}}{2.1} & \sqa{\mathbf{63.3}}{5.2} & \sqa{\mathbf{43.8}}{4.7} & \sqa{\mathbf{40.2}}{4.4} & \sqa{\mathbf{35.1}}{3.8} & \sqa{\mathbf{16.0}}{5.6} & \sqa{\mathbf{43.4}}{3.1} & \sqa{\mathbf{41.1}}{2.5} \\

\addlinespace[2pt]

GiGPO & \sqa{42.2}{5.7} & \sqa{59.3}{4.8} & \sqa{42.6}{3.9} & \sqa{36.7}{7.6} & \sqa{36.8}{4.9} & \sqa{12.8}{5.3} & \sqa{63.8}{6.8} & \sqa{42.0}{4.7} \\

GiGPO + EnvRL & \sqa{\mathbf{47.3}}{3.3} & \sqa{\mathbf{63.1}}{5.9} & \sqa{\mathbf{44.2}}{4.6} & \sqa{\mathbf{42.0}}{3.5} & \sqa{37.8}{4.8} & \sqa{\mathbf{17.6}}{4.7} & \sqa{44.2}{5.2} & \sqa{42.3}{3.4} \\

\rowcolor{blue!8}
GiGPO + BATON (Ours) & \sqa{42.9}{4.1} & \sqa{60.6}{6.1} & \sqa{44.1}{5.4} & \sqa{37.4}{5.8} & \sqa{\mathbf{39.9}}{4.2} & \sqa{13.3}{4.9} & \sqa{\mathbf{64.3}}{6.3} & \sqa{\mathbf{43.2}}{4.5} \\

\midrule

\multicolumn{9}{l}{\textit{Qwen2.5-7B-Instruct}} \\

\addlinespace[2pt]
GRPO & \sqa{47.7}{4.9} & \sqa{64.5}{6.9} & \sqa{45.5}{5.2} & \sqa{42.8}{7.1} & \sqa{38.6}{5.5} & \sqa{18.6}{6.2} & \sqa{42.2}{4.8} & \sqa{42.8}{3.6} \\

GRPO + EnvRL & \sqa{47.2}{4.3} & \sqa{64.8}{5.4} & \sqa{45.0}{3.9} & \sqa{\mathbf{44.2}}{3.0} & \sqa{37.3}{3.7} & \sqa{17.6}{6.2} & \sqa{42.6}{3.9} & \sqa{42.6}{2.8} \\

\rowcolor{blue!8}
GRPO + BATON (Ours) & \sqa{\mathbf{48.4}}{3.6} & \sqa{\mathbf{66.5}}{4.9} & \sqa{\mathbf{46.3}}{3.4} & \sqa{43.7}{3.8} & \sqa{\mathbf{52.4}}{5.5} & \sqa{\mathbf{19.7}}{4.7} & \sqa{\mathbf{44.1}}{2.8} & \sqa{\mathbf{45.9}}{2.7} \\

\addlinespace[2pt]

GiGPO & \sqa{46.2}{1.7} & \sqa{64.9}{7.7} & \sqa{46.3}{1.5} & \sqa{41.4}{7.4} & \sqa{43.8}{8.0} & \sqa{18.7}{6.8} & \sqa{68.7}{6.4} & \sqa{47.1}{4.0} \\

GiGPO + EnvRL & \sqa{45.9}{2.5} & \sqa{65.1}{4.5} & \sqa{47.2}{2.8} & \sqa{42.1}{6.9} & \sqa{42.7}{3.5} & \sqa{19.4}{3.0} & \sqa{69.3}{4.0} & \sqa{47.5}{3.7} \\

\rowcolor{blue!8}
GiGPO + BATON (Ours) & \sqa{\mathbf{47.8}}{1.8} & \sqa{\mathbf{65.6}}{3.6} & \sqa{\mathbf{49.5}}{1.9} & \sqa{\mathbf{43.3}}{3.1} & \sqa{\mathbf{45.0}}{3.8} & \sqa{\mathbf{20.4}}{4.5} & \sqa{\mathbf{69.4}}{2.7} & \sqa{\mathbf{48.7}}{2.6} \\
\bottomrule
\end{tabular}%
}
\end{table*}



\subsection{Component Ablation}
\label{sec:component_ablation}

Table~\ref{tab:component_ablation} provides a controlled $2\times2$
decomposition of BATON: the host learner alone, Bayesian Feedback Attribution
only, TMN only, and their combination.
Both axes yield substantial gains independently.
On ALFWorld with Qwen2.5-1.5B-Instruct, GRPO improves from 72.9\% to
84.1\% with Bayesian Feedback Attribution and to 82.4\% with TMN, while
BATON reaches 85.6\%.
For GiGPO, the corresponding results are 86.8\%, 93.4\%, 90.7\%, and 94.1\%.
The same complementarity holds across tasks and scales.
On WebShop with the 7B backbone, BATON reaches 70.3\% success with GRPO
and 79.5\% with GiGPO, outperforming either component alone.
These results support the dual-axis formulation: the two components are
independently effective and provide complementary gains when combined.
\providecommand{\ablval}[2]{%
  \ensuremath{\displaystyle #1_{\scriptstyle\pm #2}}}

\begin{table*}[t]
\centering
\caption{
Component ablation of BATON's two optimization axes.
BFA denotes Bayesian Feedback Attribution for Intra-Trajectory Feedback
Attribution, while TMN denotes Trajectory Mass Normalization for
Inter-Trajectory Objective Aggregation.
Values are reported as means with standard deviations.
}
\label{tab:component_ablation}
\footnotesize
\setlength{\tabcolsep}{1.45mm}
\renewcommand{\arraystretch}{1.12}
\resizebox{\linewidth}{!}{%
\begin{tabular}{lcccccccc}
\toprule
\multirow{3}{*}{Component setting}
& \multicolumn{2}{c}{ALFWorld}
& \multicolumn{4}{c}{WebShop}
& \multicolumn{2}{c}{Search-augmented QA} \\
\cmidrule(lr){2-3}\cmidrule(lr){4-7}\cmidrule(lr){8-9}
& 1.5B & 7B
& \multicolumn{2}{c}{1.5B}
& \multicolumn{2}{c}{7B}
& 3B & 7B \\
\cmidrule(lr){4-5}\cmidrule(lr){6-7}
& All & All & Score & Succ. & Score & Succ. & Avg. & Avg. \\
\midrule
GRPO
& \ablval{72.9}{3.4} & \ablval{77.7}{5.0} & \ablval{75.6}{3.6} & \ablval{57.1}{3.6} & \ablval{79.0}{2.9} & \ablval{66.3}{3.5} & \ablval{37.9}{3.1} & \ablval{42.8}{3.6} \\
+ BFA
& \ablval{{84.1}}{0.7} & \ablval{{85.4}}{3.2} & \ablval{{82.1}}{1.8} & \ablval{{67.3}}{2.9} & \ablval{{82.8}}{1.3} & \ablval{{69.4}}{1.8} & \ablval{{40.7}}{2.0} & \ablval{{44.2}}{2.2} \\

+ TMN
& \ablval{{82.4}}{2.9} & \ablval{{83.8}}{3.0} & \ablval{{79.6}}{2.5} & \ablval{{64.0}}{3.2} & \ablval{{81.5}}{2.3} & \ablval{{67.6}}{2.9} & \ablval{{39.3}}{2.8} & \ablval{{43.0}}{2.9} \\
\rowcolor{blue!8}
+ BATON (Full)
& \ablval{\mathbf{85.6}}{2.8} & \ablval{\mathbf{86.9}}{2.7} & \ablval{\mathbf{85.2}}{2.0} & \ablval{\mathbf{68.2}}{2.8} & \ablval{\mathbf{83.4}}{1.9} & \ablval{\mathbf{70.3}}{2.5} & \ablval{\mathbf{41.1}}{2.5} & \ablval{\mathbf{45.9}}{2.7} \\
\midrule
GIGPO
& \ablval{86.8}{1.8} & \ablval{90.9}{1.4} & \ablval{83.7}{1.9} & \ablval{67.1}{4.7} & \ablval{86.4}{2.4} & \ablval{74.9}{4.0} & \ablval{42.0}{4.7} & \ablval{47.1}{4.0} \\
+ BFA
& \ablval{{93.4}}{1.4} & \ablval{{93.6}}{2.7} & \ablval{{88.3}}{1.5} & \ablval{{73.4}}{2.1} & \ablval{{89.6}}{1.1} & \ablval{{78.6}}{3.7} & \ablval{{42.3}}{3.9} & \ablval{{48.2}}{3.4} \\

+ TMN
& \ablval{{90.7}}{1.9} & \ablval{{91.3}}{1.6} & \ablval{{86.0}}{1.8} & \ablval{71.6}{3.1} & \ablval{87.7}{2.1} & \ablval{{76.3}}{2.8} & \ablval{42.9}{3.1} & \ablval{47.4}{2.8} \\
\rowcolor{blue!8}
+ BATON (Full)
& \ablval{\mathbf{94.1}}{1.7} & \ablval{\mathbf{94.9}}{1.5} & \ablval{\mathbf{90.3}}{1.6} & \ablval{\mathbf{76.4}}{2.7} & \ablval{\mathbf{89.8}}{1.8} & \ablval{\mathbf{79.5}}{2.6} & \ablval{\mathbf{43.2}}{4.5} & \ablval{\mathbf{48.7}}{2.6} \\
\bottomrule
\end{tabular}%
}
\end{table*}

\subsection{Trajectory Execution Cost}
\label{sec:exp_trajectory_cost}

We further examine whether the full dual-axis framework changes the execution
behavior of the learned policies on ALFWorld with Qwen2.5-1.5B-Instruct.
Figure~\ref{fig:exp_trajectory_cost} reports the mean number of environment
actions and generated policy tokens per evaluation trajectory, including both
successful and failed episodes.
For GRPO, the full framework reduces the mean action count from 21.51 to
16.39 and the mean generated-token count from 1,734 to 1,301.
For GiGPO, the corresponding quantities decrease from 13.51 to 10.82 actions
and from 1,431 to 746 generated tokens.
These results indicate that the performance improvements are accompanied by
shorter and less verbose interaction trajectories, suggesting that the learned
policies make more efficient use of their interaction budget.
\paragraph{Computational overhead.}
BATON introduces no additional environment interaction.
The additional computation comes from Bayesian Feedback Attribution,
which performs feedback likelihood evaluation for attribution weighting,
while TMN only modifies the trajectory-level aggregation rule.
The wall-clock overhead analysis is provided in Appendix~\ref{app:overhead}.

\begin{figure}[t]
    \centering
    \includegraphics[
    width=\linewidth,
    trim=0 20 0 20 ,
    clip
]
        {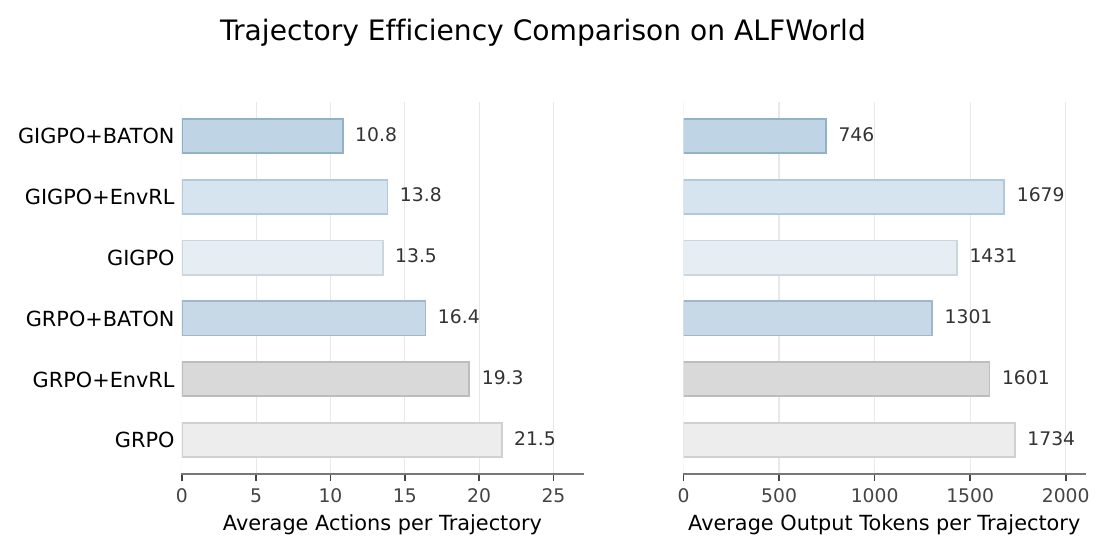}
    \caption{
    Trajectory execution cost on ALFWorld with Qwen2.5-1.5B-Instruct.
    We report mean environment-action counts and generated policy-token counts
    for the host learners, EnvRL variants, and the full dual-axis framework at
    update 150, including both successful and failed episodes.
    Lower is better.
    }
    \label{fig:exp_trajectory_cost}
\end{figure}

\section{Related Work}
\label{sec:related_work}

\paragraph{Policy optimization for LLMs and agents.}
PPO-based RLHF provides the foundation for LLM policy optimization
\citep{schulman2017ppo,ouyang2022training}, while RLOO and GRPO introduce
critic-free and group-relative alternatives
\citep{ahmadian2024back,grpo}.
Recent reasoning-oriented methods improve optimization through modified
clipping, reduction, process supervision, and sequence-level objectives,
including DAPO, Dr.~GRPO, REINFORCE++, PRIME, Open-Reasoner-Zero,
SimpleRL-Zoo, GSPO, MGAP, TASPO and SARE
\citep{yu2025dapo,liu2025drgrpo,hu2025reinforcepp,cui2025prime,
hu2025openreasonerzero,zeng2025simplerlzoo,zheng2025gspo,yang2026reconcilingprocesssupervisionoutcomebased,yang2026saresamplewiseadaptivereasoning,zhuang2026mitigatingmanifolddepartureuncertaintyaware}.
For interactive agents, WebRL and RAGEN study reinforcement learning in
environmental settings
\citep{qi2024webrl,wang2025ragen}, while GiGPO, turn-level credit assignment,
and Agent Lightning investigate finer-grained trajectory optimization
\citep{gigpo,zeng2025turncredit,luo2025agentlightning}.
BATON is complementary to these approaches by separating within-trajectory
feedback attribution from across-trajectory objective aggregation while
preserving the host learner's optimization structure.

\paragraph{Learning from interaction feedback.}
Interaction feedback has long been used to ground language agents through
actions and subsequent observations
\citep{yao2023react,schick2023toolformer}.
Reflexion, Self-Refine, CRITIC, LATS, and Voyager exploit verbal, tool-based,
or environmental feedback for iterative improvement
\citep{shinn2023reflexion,madaan2023selfrefine,gou2024critic,
zhou2023lats,wang2023voyager}.
AgentTuning and AgentGym further study learning from interaction trajectory
collections
\citep{zeng2023agenttuning,xi2024agentgym}, while EnvRL introduces
state-prediction and inverse-dynamics objectives for agent reinforcement
learning
\citep{envrl}.
Different from these approaches, Bayesian Feedback Attribution compares the
observed feedback compatibility of executed and policy-sampled alternative
actions, and uses the resulting evidence to reweight the host policy objective
without replacing its reward or credit estimator.

\paragraph{Interactive agent environments and search.}
Long-horizon agent learning has been studied in embodied and text environments
including ALFWorld, WebShop, and ScienceWorld
\citep{shridhar2021alfworld,yao2022webshop,wang2022scienceworld},
as well as broader evaluation suites such as AgentBench, AgentBoard, WebArena,
Mind2Web, WorkArena, and WebLINX
\citep{liu2023agentbench,ma2024agentboard,zhou2023webarena,
deng2023mind2web,drouin2024workarena,lu2024weblinx}.
Recent tool-use benchmarks and software environments further include
$\tau$-bench and SWE-agent
\citep{yao2024taubench,yang2024sweagent}.
Search-R1, ZeroSearch, and StepSearch optimize multi-turn retrieval agents
with reinforcement learning
\citep{jin2025searchr1,sun2025zerosearch,zheng2025stepsearch}.
These environments motivate explicit treatment of long and heterogeneous
interaction trajectories, which BATON addresses through separate attribution
and aggregation mechanisms.

\section{Conclusion}
\label{sec:conclusion}

We present \textbf{BATON}, a dual-axis policy optimization framework for
multi-turn LLM agents.
BATON combines Bayesian Feedback Attribution, which reallocates learning mass
among decisions within trajectories based on feedback compatibility, with
Trajectory Mass Normalization, which balances contributions across trajectories.
The framework preserves the host learner's optimization structure and is
compatible with GRPO and GiGPO.
Experiments on ALFWorld, WebShop, and search-augmented question answering
demonstrate consistent improvements across model scales.



\bibliography{iclr2027_conference}
\bibliographystyle{iclr2027_conference}

\appendix
\section{Proofs of Theoretical Properties}
\label{app:bfa_theory}

\subsection{Proof of Proposition~\ref{prop:evidence_interpretation}}

\begin{proof}

Under the pairwise construction, the Bayesian posterior is

\[
\rho_{\theta_{\rm old}}(a|h,o')
=
\frac{
f_{\theta_{\rm old}}(o'|h,a)
}{
f_{\theta_{\rm old}}(o'|h,a)
+
f_{\theta_{\rm old}}(o'|h,\tilde a)
}.
\]

Since the empirical prior is uniform,

\[
\rho_0(a|h)=\frac12 .
\]

Therefore,

\[
e(h,a,o')
=
\frac{
\rho_{\theta_{\rm old}}(a|h,o')
}{
\rho_0(a|h)
}
\]

can be written as

\[
e(h,a,o')
=
\frac{
2f_{\theta_{\rm old}}(o'|h,a)
}{
f_{\theta_{\rm old}}(o'|h,a)
+
f_{\theta_{\rm old}}(o'|h,\tilde a)
}.
\]

Let

\[
m(o'|h)
=
\frac12f_{\theta_{\rm old}}(o'|h,a)
+
\frac12f_{\theta_{\rm old}}(o'|h,\tilde a).
\]

Then,

\[
e(h,a,o')
=
\frac{
f_{\theta_{\rm old}}(o'|h,a)
}{
m(o'|h)
}.
\]

Taking expectation over
$o'\sim f_{\theta_{\rm old}}(\cdot|h,a)$ gives

\[
\begin{aligned}
&
\mathbb E_{o'\sim f_{\theta_{\rm old}}(\cdot|h,a)}
[
\log e(h,a,o')
]
\\
&=
\mathbb E_{o'\sim f_{\theta_{\rm old}}(\cdot|h,a)}
\left[
\log
\frac{
f_{\theta_{\rm old}}(o'|h,a)
}{
m(o'|h)
}
\right]
\\
&=
D_{\mathrm{KL}}
(
f_{\theta_{\rm old}}(\cdot|h,a)
\|
m(\cdot|h)
)
\geq0 .
\end{aligned}
\]

Thus, the expected log evidence equals the KL divergence between the
executed action's feedback distribution and the pairwise mixture distribution.

\end{proof}

\section{Computational Overhead Analysis}
\label{app:overhead}

We analyze the additional computational cost introduced by BATON.
All measurements are conducted on ALFWorld with Qwen2.5-1.5B-Instruct
under the same rollout budget, batch size, optimizer settings, and hardware
configuration as the main experiments.

BATON does not introduce additional environment interaction.
The overhead mainly comes from Bayesian Feedback Attribution, which requires
additional feedback likelihood evaluation for attribution weighting.
Trajectory Mass Normalization only changes the trajectory-level aggregation
rule and therefore introduces negligible computational cost.

Table~\ref{tab:overhead} reports the total wall-clock training time and the
relative cost normalized by the corresponding host learner.

\section{Additional Search-Augmented QA Results}
\label{app:searchqa}

Table~\ref{tab:main_results_searchqa} reports the domain-level exact match results
on the seven search-augmented question answering benchmarks.
The averaged score reported in the main paper is computed as the unweighted
mean over these domains.

\begin{table*}[hbtp]
\centering
\caption{
Main results on search-augmented question answering.
We report exact match (\%) on seven evaluation domains and their unweighted
average.
NQ, TriviaQA, and PopQA are single-hop domains; HotpotQA,
2WikiMultihopQA, MuSiQue, and Bamboogle are multi-hop domains.
Within each backbone, bold marks the highest mean in each metric column
separately within the GRPO-based and GiGPO-based method groups.
BATON rows are shaded in blue.
}
\label{tab:main_results_searchqa}
\footnotesize
\setlength{\tabcolsep}{1.35mm}
\renewcommand{\arraystretch}{1.12}
\resizebox{\linewidth}{!}{%
\begin{tabular}{lcccccccc}
\toprule
\multirow{2}{*}{Method}
& \multicolumn{3}{c}{Single-Hop QA}
& \multicolumn{4}{c}{Multi-Hop QA}
& \multicolumn{1}{c}{Overall} \\
\cmidrule(lr){2-4}\cmidrule(lr){5-8}\cmidrule(lr){9-9}
& NQ & TriviaQA & PopQA & HotpotQA & 2Wiki & MuSiQue & Bamboogle & Avg. \\
\midrule
\multicolumn{9}{l}{\textit{Qwen2.5-3B-Instruct}} \\
R1-Instruct & \sqa{26.8}{3.8} & \sqa{53.9}{6.2} & \sqa{19.7}{4.5} & \sqa{23.9}{5.7} & \sqa{29.0}{4.1} & \sqa{7.4}{2.8} & \sqa{29.1}{5.4} & \sqa{27.0}{3.6} \\
Search-R1 & \sqa{34.3}{2.9} & \sqa{54.2}{5.8} & \sqa{38.0}{4.7} & \sqa{32.2}{6.3} & \sqa{32.1}{4.9} & \sqa{10.1}{3.6} & \sqa{26.6}{5.1} & \sqa{32.5}{3.8} \\
ZeroSearch & \sqa{41.1}{2.6} & \sqa{57.6}{4.9} & \sqa{44.6}{5.6} & \sqa{27.6}{5.2} & \sqa{29.8}{4.3} & \sqa{9.9}{3.9} & \sqa{11.3}{2.7} & \sqa{31.7}{3.4} \\
StepSearch & -- & -- & -- & \sqa{34.4}{5.8} & \sqa{32.2}{4.6} & \sqa{17.6}{4.2} & \sqa{34.2}{6.1} & -- \\
\addlinespace[2pt]
GRPO & \sqa{43.4}{2.8} & \sqa{62.1}{6.4} & \sqa{41.4}{5.1} & \sqa{37.8}{7.2} & \sqa{30.7}{4.6} & \sqa{14.8}{5.8} & \sqa{35.4}{3.7} & \sqa{37.9}{3.1} \\
GRPO + EnvRL & \sqa{44.6}{2.4} & \sqa{61.6}{6.9} & \sqa{42.3}{5.8} & \sqa{36.7}{5.3} & \sqa{31.1}{4.1} & \sqa{14.5}{4.9} & \sqa{39.0}{4.5} & \sqa{38.3}{3.6} \\
\rowcolor{blue!8}
GRPO + BATON (Ours) & \sqa{\mathbf{46.2}}{2.1} & \sqa{\mathbf{63.3}}{5.2} & \sqa{\mathbf{43.8}}{4.7} & \sqa{\mathbf{40.2}}{4.4} & \sqa{\mathbf{35.1}}{3.8} & \sqa{\mathbf{16.0}}{5.6} & \sqa{\mathbf{43.4}}{3.1} & \sqa{\mathbf{41.1}}{2.5} \\
\addlinespace[2pt]
GiGPO & \sqa{42.2}{5.7} & \sqa{59.3}{4.8} & \sqa{42.6}{3.9} & \sqa{36.7}{7.6} & \sqa{36.8}{4.9} & \sqa{12.8}{5.3} & \sqa{63.8}{6.8} & \sqa{42.0}{4.7} \\
GiGPO + EnvRL & \sqa{\mathbf{47.3}}{3.3} & \sqa{\mathbf{63.1}}{5.9} & \sqa{\mathbf{44.2}}{4.6} & \sqa{\mathbf{42.0}}{3.5} & \sqa{37.8}{4.8} & \sqa{\mathbf{17.6}}{4.7} & \sqa{44.2}{5.2} & \sqa{42.3}{3.4} \\
\rowcolor{blue!8}
GiGPO +  BATON (Ours) & \sqa{42.9}{4.1} & \sqa{60.6}{6.1} & \sqa{44.1}{5.4} & \sqa{37.4}{5.8} & \sqa{\mathbf{39.9}}{4.2} & \sqa{13.3}{4.9} & \sqa{\mathbf{64.3}}{6.3} & \sqa{\mathbf{43.2}}{4.5} \\
\midrule
\multicolumn{9}{l}{\textit{Qwen2.5-7B-Instruct}} \\
R1-Instruct & \sqa{20.8}{4.2} & \sqa{45.1}{7.1} & \sqa{16.9}{3.8} & \sqa{21.0}{5.4} & \sqa{27.3}{4.7} & \sqa{6.2}{2.5} & \sqa{19.0}{4.9} & \sqa{22.3}{3.9} \\
Search-R1 & \sqa{39.5}{3.6} & \sqa{60.8}{5.4} & \sqa{39.9}{4.2} & \sqa{36.8}{6.8} & \sqa{40.3}{5.9} & \sqa{14.4}{4.1} & \sqa{37.0}{5.3} & \sqa{38.6}{3.7} \\
ZeroSearch & \sqa{43.8}{2.9} & \sqa{61.6}{6.2} & \sqa{51.7}{5.7} & \sqa{34.4}{4.8} & \sqa{35.4}{4.4} & \sqa{18.2}{5.6} & \sqa{27.6}{4.6} & \sqa{39.2}{3.5} \\
StepSearch & -- & -- & -- & \sqa{38.8}{5.1} & \sqa{36.4}{4.7} & \sqa{22.8}{5.3} & \sqa{39.8}{6.5} & -- \\
\addlinespace[2pt]
GRPO & \sqa{47.7}{4.9} & \sqa{64.5}{6.9} & \sqa{45.5}{5.2} & \sqa{42.8}{7.1} & \sqa{38.6}{5.5} & \sqa{18.6}{6.2} & \sqa{42.2}{4.8} & \sqa{42.8}{3.6} \\
GRPO + EnvRL & \sqa{47.2}{4.3} & \sqa{64.8}{5.4} & \sqa{45.0}{3.9} & \sqa{\mathbf{44.2}}{3.0} & \sqa{37.3}{3.7} & \sqa{17.6}{6.2} & \sqa{42.6}{3.9} & \sqa{42.6}{2.8} \\
\rowcolor{blue!8}
GRPO + BATON (Ours) & \sqa{\mathbf{48.4}}{3.6} & \sqa{\mathbf{66.5}}{4.9} & \sqa{\mathbf{46.3}}{3.4} & \sqa{43.7}{3.8} & \sqa{\mathbf{52.4}}{5.5} & \sqa{\mathbf{19.7}}{4.7} & \sqa{\mathbf{44.1}}{2.8} & \sqa{\mathbf{45.9}}{2.7} \\
\addlinespace[2pt]
GiGPO & \sqa{46.2}{1.7} & \sqa{64.9}{7.7} & \sqa{46.3}{1.5} & \sqa{41.4}{7.4} & \sqa{43.8}{8.0} & \sqa{18.7}{6.8} & \sqa{68.7}{6.4} & \sqa{47.1}{4.0} \\
GiGPO + EnvRL & \sqa{45.9}{2.5} & \sqa{65.1}{4.5} & \sqa{47.2}{2.8} & \sqa{42.1}{6.9} & \sqa{42.7}{3.5} & \sqa{19.4}{3.0} & \sqa{69.3}{4.0} & \sqa{47.5}{3.7} \\
\rowcolor{blue!8}
GiGPO + BATON (Ours) & \sqa{\mathbf{47.8}}{1.8} & \sqa{\mathbf{65.6}}{3.6} & \sqa{\mathbf{49.5}}{1.9} & \sqa{\mathbf{43.3}}{3.1} & \sqa{\mathbf{45.0}}{3.8} & \sqa{\mathbf{20.4}}{4.5} & \sqa{\mathbf{69.4}}{2.7} & \sqa{\mathbf{48.7}}{2.6} \\
\bottomrule
\end{tabular}%
}
\end{table*}

\begin{table}[hbtp]
\centering
\caption{
Wall-clock training overhead of BATON on ALFWorld.
Training time is measured over the complete training process under identical
rollout and optimization settings. Values are normalized by the corresponding
host learner.
}
\label{tab:overhead}
\small
\begin{tabular}{lcc}
\toprule
Method & Training Time (hours) & Relative Cost \\
\midrule
GRPO
& 18.05 & 1.00$\times$ \\
GRPO + TMN
& 18.50 & 1.02$\times$ \\
GRPO + BATON
& 21.07 & 1.17$\times$ \\
\midrule
GiGPO
& 22.77 & 1.00$\times$ \\
GiGPO + TMN
& 23.08 & 1.01$\times$ \\
GiGPO + BATON
& 25.15 & 1.10$\times$ \\
\bottomrule
\end{tabular}
\end{table}
\end{document}